\newcommand\doubleplus{+\kern-1.3ex+\kern0.8ex}
\newtheorem{assumption}{Assumption}
\newtheorem{lemma}{Lemma}
\newtheorem{theorem}{Theorem}
\DeclareMathOperator{\vect}{vec}
\title{Random at First, Fast at Last: NTK-Guided Fourier Pre-Processing for Tabular DL}
\author{%
  Renat Sergazinov\thanks{Equal contribution, correspondence to mc.sergazinov@gmail.com} \\
  Seattle, WA 98121 \\
  \texttt{mc.sergazinov@gmail.com}
  \And
  Jing Wu\footnotemark[1] \\
  Herndon, VA 20171 \\
  \texttt{jingwu6@illinois.edu}
  \And
  Shao-An Yin\footnotemark[1]  \\
  Minneapolis, MN 55414\\
  \texttt{yin00425@umn.edu}
}
\begin{document}

\maketitle

\begin{abstract}
While random Fourier features are a classic tool in kernel methods, their utility as a pre-processing step for deep learning on tabular data has been largely overlooked. Motivated by shortcomings in tabular deep learning pipelines -- revealed through Neural Tangent Kernel (NTK) analysis -- we revisit and re-purpose random Fourier mappings as a parameter-free, architecture-agnostic transformation. By projecting each input into a fixed feature space via $\sin(\mathbf{w}^{\top}\mathbf{x})$ and $\cos(\mathbf{w}^{\top}\mathbf{x})$ with frequencies $\mathbf{w}$ sampled once at initialization, this approach circumvents the need for ad hoc normalization or additional learnable embeddings. We show within the NTK framework that this mapping (i) bounds and conditions the network’s initial NTK spectrum, and (ii) introduces a bias that shortens the optimization trajectory, thereby accelerating gradient-based training. These effects pre-condition the network with a stable kernel from the outset. Empirically, we demonstrate that deep networks trained on Fourier-transformed inputs converge more rapidly and consistently achieve strong final performance, often with fewer epochs and less hyperparameter tuning. Our findings establish random Fourier pre-processing as a theoretically motivated, plug-and-play enhancement for tabular deep learning.


\end{abstract}

\section{Introduction}

\begin{wrapfigure}[18]{r}{0.4\textwidth}
    \centering
    \includegraphics[width=0.4\textwidth]{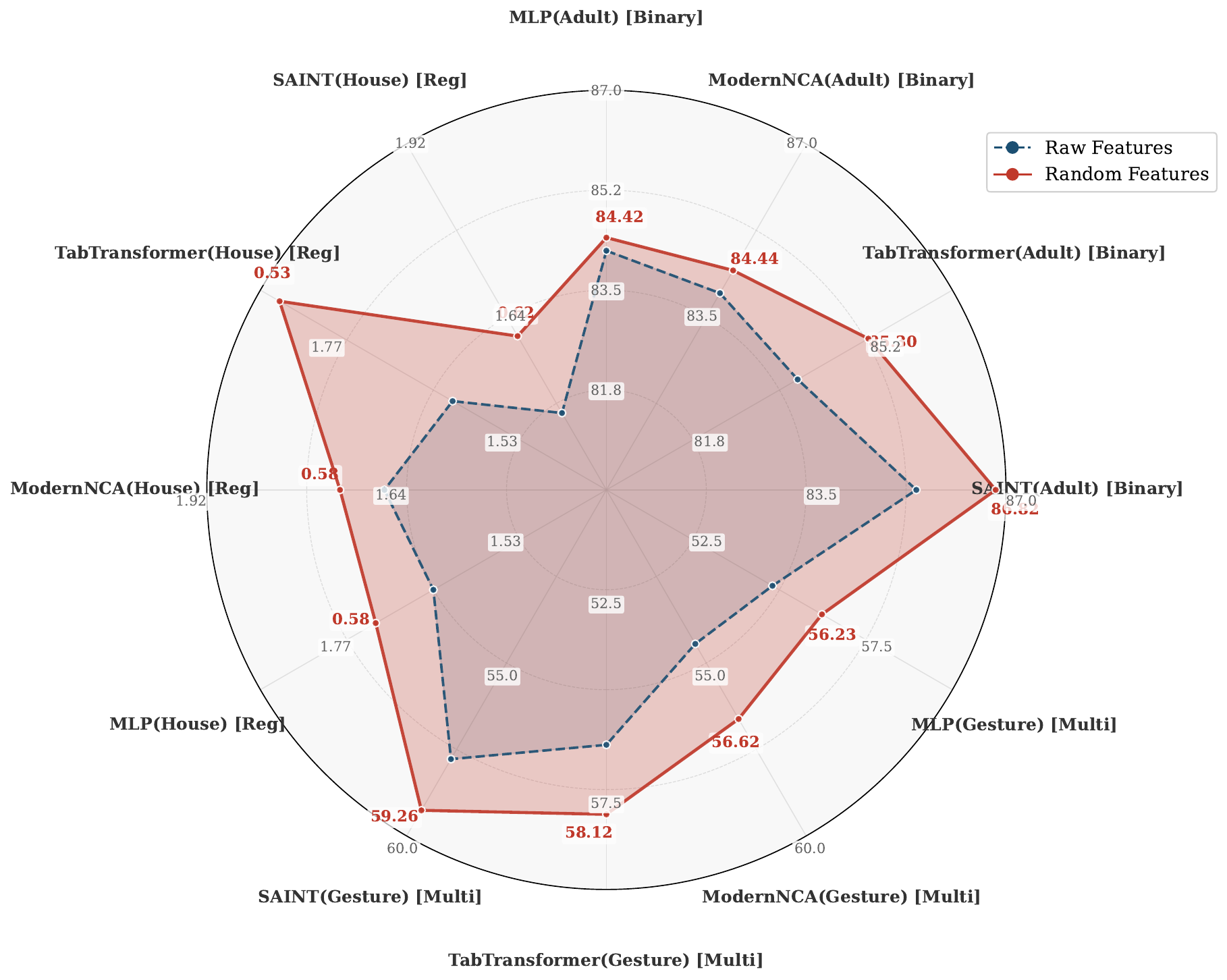}
    \vspace{-5mm}
    \caption{Performance comparison between raw and random features across four models on three datasets: Adult (binary classification), Gesture (multi-class classification), and House (regression).}
    \label{fig:radar}
    \vspace{-2mm}
\end{wrapfigure}

Deep learning (DL) has driven substantial progress across various domains including language, vision, and speech. However, tabular deep learning remains an active area of research with several recent specialized architectures such as TabNet \citep{arik2021tabnet}, TabTransformer \citep{huang_2020_tabtransformer}, SAINT \citep{somepalli2021saint}, and ModernNCA \citep{ye_2024_modernnca}.

A consistent theme in successful deep learning systems across other domains is the explicit use of domain-specific feature preprocessing steps, such as convolutions in vision \citep{lecun1989handwritten}, byte-pair encoding in NLP \citep{gage1994new}, and STFT/MFCC pipelines in speech processing. In contrast, most tabular DL methods primarily rely on simple normalization layers without a theoretically justified preprocessing stage. Recent works like feature tokenizer \citep{gorishniy2022embeddings} and RealMLP \citep{holzmuller_2025_realmlp} have begun to explore learned preprocessing approaches; however, the lack of theoretical guidance makes these methods challenging to tune effectively in practice.

We revisit tabular DL through the lens of neural network kernel theory. Specifically, we view finite-width neural networks trained via gradient descent as kernel machines characterized by their neural network tangent kernels (NTKs) as proposed in \citet{jacot_2018_ntk,adlam_2020_highdimntk}. \citet{jeffares_2025_telescopingntk} recently highlighted a notable parallel between neural networks and gradient-boosted trees (GBTs) through their kernel formulations, pinpointing two critical issues with neural network kernels: (1) \textit{unboundedness}, causing instability and unpredictable performance on test samples, especially under distribution shifts, and (2) \textit{slow adaptation}, as initial kernels are data-agnostic and adapt only slowly through training.

To address these issues, we propose a simple, parameter-free pre-processing step using random Fourier features (RFF):
\begin{equation}
\mathbf{x} \mapsto \left[\sin\bigl(\mathbf{w}^\top \mathbf{x}\bigr), \cos\bigl(\mathbf{w}^\top \mathbf{x}\bigr)\right], \quad \mathbf{w} \sim p(\mathbf{w})\,.
\end{equation}
Random Fourier features approximate a shift-invariant kernel determined by the distribution $p(\mathbf{w})$ \citep{rahimi2007random}, naturally ensuring boundedness and stability of the kernel spectrum. Our theoretical analysis confirms that this pre-processing step:
\begin{itemize}
\item \textbf{Stabilizes the kernel:} Ensuring bounded kernel values and mitigating instability due to extreme gradient magnitudes.
\item \textbf{Introduces a beneficial inductive bias:} Facilitating faster adaptation of the model during training, thus accelerating convergence.
\end{itemize}

Empirically, our "plug-and-play" random Fourier pre-processing leads to faster training convergence and achieves comparable or superior accuracy on a broad set of tabular benchmarks, without introducing additional parameters and with minimal runtime overhead.

In summary, our contributions are:
\begin{itemize}
\item \textbf{Method:} A simple and effective random Fourier preprocessing step applicable to various tabular DL architectures.
\item \textbf{Theory:} Kernel-based insights showing how random Fourier features effectively address kernel instability and slow adaptation identified by \citet{jeffares_2025_telescopingntk}.
\item \textbf{Practice:} Extensive empirical validation demonstrating improved convergence speed and robust performance on real-world datasets (Figure~\ref{fig:radar}).
\end{itemize}
These results position random Fourier features as a theoretically grounded and practical preprocessing solution for enhancing the stability and efficiency of tabular deep learning models.

\section{Related Works}
\label{sec:related}

Research on deep learning for tabular data has evolved along two largely independent axes: the \emph{choice of backbone architecture} and the \emph{design of feature-processing schemes}. Our work primarily advances the latter through principled analysis based on neural tangent kernels (NTK). We review each strand and position our contribution within the NTK framework.

\subsection{Architectural backbones}
Existing backbones for tabular deep learning broadly belong to three families: custom DNN architectures, Transformer-based models, and retrieval-augmented networks.

\emph{Custom DNN architectures.} This research strand investigates various modifications to traditional neural architectures by introducing custom layers, novel activation functions, altering weight-matrix parameterizations, and employing specialized loss functions. Representative examples include RealMLP \citep{holzmuller_2025_realmlp}, which introduces scalar expansions, Abstract-DNN \citep{sotoudeh_2020_ann}, which utilizes abstract interpretation for robustness, Self-Normalizing Networks (SNN) \citep{klambauer_2017_snn}, designed to stabilize training through normalization, Kolmogorov–Arnold Networks (KAN) \citep{liu_2024_kan}, which employ adaptive functional forms inspired by the Kolmogorov–Arnold representation theorem, SAINT \citep{somepalli2021saint} integrating self-attention with differentiable embedding layers, NODE \citep{popov2019neural} implementing differentiable decision trees, TabNet \citep{arik2021tabnet} introducing sequential attention mechanisms, and tree-inspired neural architectures \citep{hazimeh_2020_treednn}, which leverage hierarchical decision processes.

\emph{Transformer-based models.} Inspired by the transformative success of attention mechanisms in sequential data modeling \citep{vaswani_2017_attention}, TabTransformer \citep{huang_2020_tabtransformer} and TabLLM \citep{hegselmann_2023_tabllm} adapt transformers to the heterogeneous structure of tabular data, introducing customized positional encodings and attention frameworks explicitly tailored for non-sequential inputs.

\emph{Retrieval-augmented networks.} These approaches integrate non-parametric memory components inspired by retrieval systems, leveraging nearest-neighbor retrieval strategies. Notable examples include ModernNCA \citep{ye_2024_modernnca} and TabR \citep{gorishniy_2024_tabr}, which combine parametric encoders with external memory banks, enhancing interpretability, controllability, and efficiency.

\subsection{Feature processing for tabular DNNs}
Unlike in vision or natural language processing—where convolutional filters, tokenization, and spectrogram transformations are standard practice—tabular deep learning still lacks a universally accepted, theoretically principled preprocessing standard.

\emph{Self-supervised embeddings.} Several methods propose learning additional feature embeddings via self-supervised pre-training objectives. For instance, SwitchTab \citep{wu_2024_switchtab}, PTaRL \citep{ye_2024_ptarl}, VIME \citep{yoon2020vime}, TaBERT \citep{yin2020tabert}, ReconTab \citep{chen2023recontab}, and SCARF \citep{bahri2021scarf} all leverage self-supervised learning strategies to enhance representation quality. These methods often achieve significant performance boosts (e.g., SwitchTab reports improvements up to +5 percentage points). However, these techniques introduce additional complexity, including a secondary training stage and a substantial hyper-parameter search space.

\emph{Learned deterministic transforms.} Another class of feature processing methods employs learned but deterministic transformations that are jointly optimized with the primary model. For example, Feature Tokenization \citep{gorishniy2022embeddings} and RealMLP’s scalar expansions \citep{holzmuller_2025_realmlp} provide simple yet interpretable transformations on each raw input variable, yielding reliable accuracy gains at moderate computational cost. Nonetheless, these approaches still lack explicit theoretical insights, complicating hyper-parameter tuning and optimization.

\subsection{DNNs through the kernel lens}
The Neural Tangent Kernel (NTK) theory \citep{jacot_2018_ntk} shows that infinitely wide neural networks trained via gradient descent can be precisely described as kernel machines, governed by the NTK. Subsequent research has analyzed NTK properties such as approximation power to standard-width DNNs, convergence behavior, and provided explanations for phenomena like double-descent and grokking \citep{adlam_2020_highdimntk,chen_2020_generalizedntk,domingos_2020_sgdntk,atanasov_2021_alignmentntk,jeffares_2025_telescopingntk}. See \citet{golikov_2022_ntk_survey} for a comprehensive review. Two critical insights from this line of research particularly relevant to tabular data are:

\begin{itemize}
\item \textbf{Unbounded kernels.} The conventional MLP-induced NTK can produce unbounded kernel values, potentially degrading generalization under distribution shifts commonly observed in tabular datasets.
\item \textbf{Delayed beneficial optimization phases.} NTK-driven insights reveal beneficial phenomena such as "phase shifts" (e.g., grokking) that typically occur late in training, conflicting with the practical requirement for quick convergence in tabular scenarios.
\end{itemize}

\paragraph{Our contribution}
We systematically address the feature processing challenge for tabular deep learning by proposing a theoretically principled preprocessing step. Specifically, we deterministically map input features using fixed random Fourier features \citep{rahimi2007random}. This preprocessing strategy (i) guarantees bounded, well-conditioned kernels before training commences and (ii) introduces a beneficial inductive bias, thus significantly shortening the gradient descent optimization path. The method is parameter-free, applicable to any architecture, and significantly reduces feature-specific hyper-parameter tuning. Our comprehensive experiments empirically validate these theoretical claims, demonstrating consistent improvements in training convergence and final accuracy compared to competitive tabular DL baselines.

\section{Methodology}
\label{sec:method}

Our goal is to equip any deep network for tabular data with a \emph{parameter‑free}, plug‑and‑play feature transform that (i) requires no extra training, (ii) improves performance, and (iii) admits a clear theoretical rationale.  We achieve this by deterministically mapping each raw input~\(\mathbf{x}\in\mathbb{R}^d\) into a fixed \emph{random Fourier feature} space before feeding it to the network.

\subsection{Random Fourier Feature Transform}
\label{subsec:rff}

\begin{figure}[t]
    \centering
    \begin{minipage}{0.55\linewidth}
        \centering
        \includegraphics[width=\linewidth]{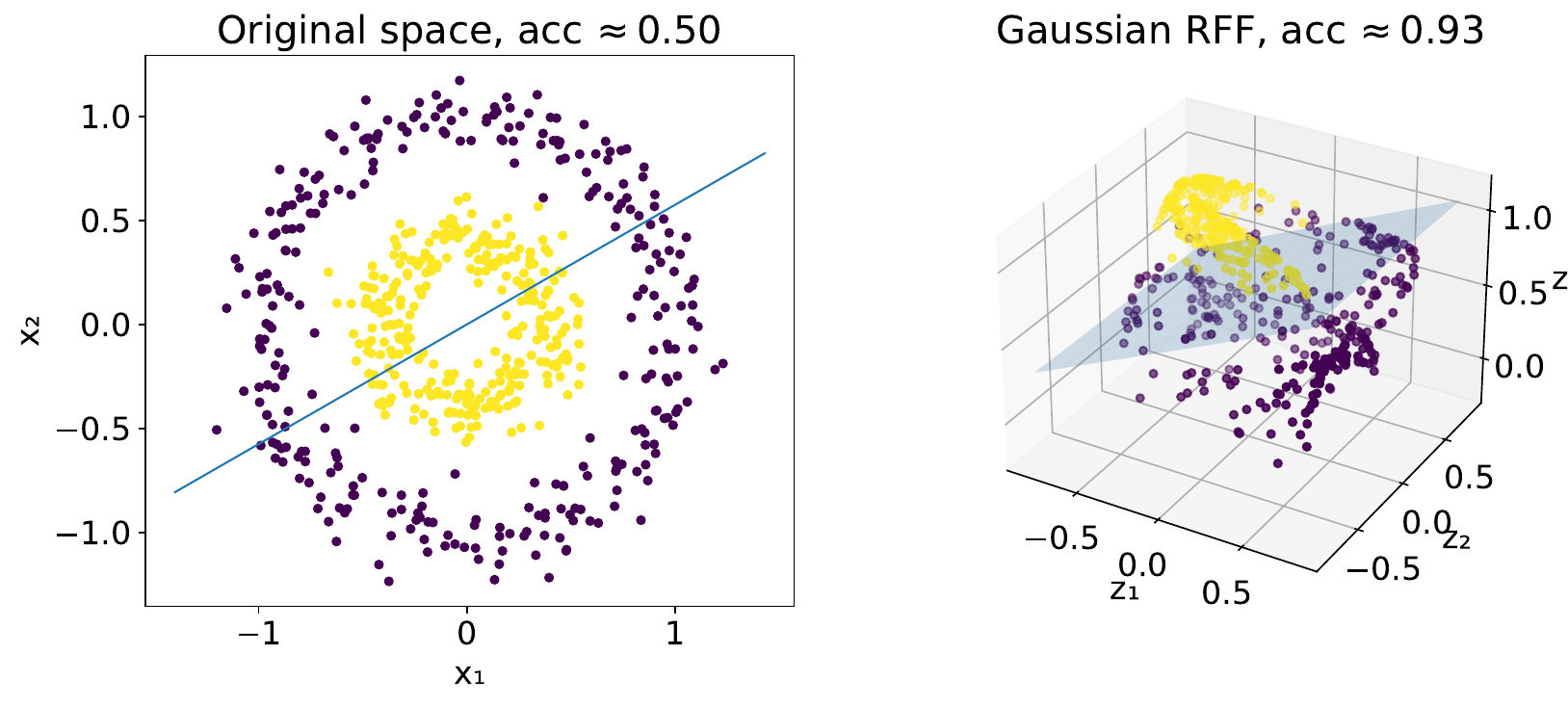}
    \end{minipage}
    \hfill
    \begin{minipage}{0.4\linewidth}
        \centering
        \resizebox{\columnwidth}{!}{%
        \begin{tabular}{@{}lll@{}}
            \toprule
            \textbf{Kernel Name} & \( k(\Delta) \) & \( p(\omega) \) \\
            \midrule
            Gaussian & \( e^{-\frac{\|\Delta\|_2^2}{2}} \) & \( (2\pi)^{-\frac{D}{2}} e^{-\frac{\|\omega\|_2^2}{2}} \) \\
            Laplacian & \( e^{-\|\Delta\|_1} \) & \( \prod_d \frac{1}{\pi (1 + \omega_d^2)} \) \\
            Cauchy & \( \prod_d \frac{2}{1 + \Delta_d^2} \) & \( e^{-\|\Delta\|_1} \) \\
            \bottomrule
        \end{tabular}
        }
    \end{minipage}
    \caption{Random Fourier Features: (left) visualization of the mapping process from input space to feature space, and (right) a list of popular shift-invariant kernels along with their Fourier transforms.}
    \label{fig:rff-combined}
\end{figure}

Let \(p(\omega)\) be the Fourier transform of a chosen shift‑invariant kernel \(k(\mathbf{x},\mathbf{x}^\prime) = k(\bm{\Delta})\).  We draw once at initialization
\[
\Omega = [\omega_1,\dots,\omega_D]\in\mathbb{R}^{d\times D}, 
\quad \omega_i\sim p(\omega),
\]
and define
\[
\Phi(\mathbf{x}) = \sqrt{\tfrac{2}{D}}\bigl[\,
\sin(\Omega^\top \mathbf{x})\,,\cos(\Omega^\top \mathbf{x})
\bigr]\in\mathbb{R}^{2D},
\]
where $\sin, \cos$ are applied element-wise. Crucially, \(\Omega\) remains fixed throughout training and inference.  Any architecture—MLP, transformer, or custom block—can simply replace its original input \(\mathbf{x}\) with \(\Phi(\mathbf{x})\), or concatenate them as needed.  In practice, one selects \(p(\omega)\) to match desired kernel properties (e.g.\ Gaussian for RBF) and picks \(D\) large enough to capture the dominant spectral mass of \(p\).

In Figure~\ref{fig:rff-combined}, we illustrate how transforming features using random Fourier features approximates a kernel machine based on the chosen sampling distribution \citep{rahimi2007random}. In this case, the data clearly follows a Gaussian kernel. Trying to fit a logistic regression in the original space leads to a random guess (subpar) accuracy of prediction. Transforming the data by sampling frequency from the 2-dimensional Gaussian distribution, we select $D = 3$ for the number of Monte Carlo samples. For illustration purposes, we also use an equivalent formulation of the random Fourier features $\tilde \Phi(\mathbf{x})  = D^{-1/2} \cos(\Omega^T \mathbf{x} + \mathbf{b})$, where $\mathbf{b}$ is sampled from the $\mathcal{U}(0,2\pi)$ -- see the note from \citet{sutherland2015error} on when $\Phi$ is preferred over $\tilde \Phi$. We plot the resulting transformed features on the right. Fitting a logistic regression, we now obtain a significantly higher accuracy.

\subsection{Optimization as Weight‑Averaging and Kernel Representation}
\label{sec:approx}

Consider an $m$-layer Multi-layer Perceptron (MLP), define the output of the $(k+1)$-th hidden layer as
\begin{equation}
    \mathbf{h}^{k+1}=\sigma_k(W_k \mathbf{h}^{k}) \in \mathbb{R}^{d_{k+1}},
\end{equation}
with the initial input $\mathbf{h}^{0}=\mathbf{x} \in \mathbb{R}^d$. The final output is given by
\begin{equation}\label{equ:mlp}
    \hat{y} = f_\theta(x)=\mathbf{h}^{m}\in\mathbb R,
\end{equation}
with the weight matrices $W_k$ collected in $\theta=\{W_k\}_{k=0}^{m-1}$.

Following \citet{jeffares_2025_telescopingntk}, we build on the telescoping model approximation to the trained DNN, $f_{\theta_T}$ (defined in equation (\ref{equ:mlp})), which expresses the network output after \(T\) SGD steps as
\begin{equation}
f_{\boldsymbol{\theta}_T}(\mathbf{x})=
f_{\boldsymbol{\theta}_0}(\mathbf{x})
- \sum_{t=1}^{T}\sum_{i\in [n]}
K_t(\mathbf{x},\mathbf{x}_i)\,g_{it}^\ell.
\label{eq:telescoping}
\end{equation}
Here \(g_{it}^\ell\) is the per‑example loss gradient and
\(
K_t(\mathbf{x},\mathbf{x}_i)
=\nabla_{\boldsymbol{\theta}}f_{\boldsymbol{\theta}_{t-1}}(\mathbf{x})^\top
\nabla_{\boldsymbol{\theta}}f_{\boldsymbol{\theta}_{t-1}}(\mathbf{x}_i)
\)
is the inner‑product “kernel’’ at step \(t\).  In practice, this kernel is
\emph{randomly initialized} and can be \emph{unbounded} due to exploding or
vanishing gradients, especially on tabular features with heavy‑tailed or
heterogeneous distributions.  We hypothesize that this unpredictability and poor conditioning hinder fast, reliable convergence of a DNN. 

\subsection{Fourier‑Feature Preprocessing and Kernel Regularization}
To tame the unbounded, random kernel in \ref{eq:telescoping}, we pre-process inputs via
fixed random Fourier features \(\Phi(\mathbf{x})\) (see \S\ref{sec:method}).
Writing the SGD approximation for the transformed network
\(\tilde f_{\theta_T}\circ \Phi\) yields an analogous kernel term
\[
\tilde K_t(\mathbf{x},\mathbf{x}_i)
=
\nabla_{\boldsymbol{\theta}}f_{\boldsymbol{\theta}_{t-1}}\bigl(\Phi(\mathbf{x})\bigr)^\top
\nabla_{\boldsymbol{\theta}}f_{\boldsymbol{\theta}_{t-1}}\bigl(\Phi(\mathbf{x}_i)\bigr).
\]
Our main theoretical result shows that \(\{\tilde K_t\}\) become uniformly
bounded, and that the Fourier map injects an explicit bias term tied to
the sampling distribution.


\begin{assumption}
\label{ass:main_assumption}
We make the following standard assumptions:
\begin{enumerate}
    \item The Multi-Layer Perceptron $f_{\bm{\theta}}(\cdot)$ is non-degenerate, i.e., $f_{\bm{\theta}}(\mathbf{x}) \neq \text{const}$ for all $\mathbf{x} \in \mathbb{R}^d$.
    \item Each activation function $\sigma_k$ in every layer $k$ of the neural network is $1$‑Lipschitz and $s$‑homogeneous.
\end{enumerate}
\end{assumption}

\begin{theorem}[Kernel Boundedness and Bias]
\label{thm:boundedness}
Consider kernels $K_t$ and $\tilde K_t$ associated with a neural network $f_\theta: \mathbf{x} \to \hat{y}$, trained without and with random Fourier features $\Phi(\mathbf{x})$, respectively. Then, the following properties hold:
\begin{enumerate}
    \item As the input feature becomes unbounded, i.e., $\| \mathbf{x}\| \to \infty$, the kernel $K_t$ becomes unbounded, specifically,
    \[
    \sup_{\mathbf{x},\mathbf{x}'} K_t(\mathbf{x},\mathbf{x}') = \infty.
    \]
    \item Under Assumption \ref{ass:main_assumption}, the kernel $\tilde K_t$ admits an upper-bound decomposition:
    \[
    \tilde K_t(\mathbf{x}, \mathbf{x}_i) = \gamma_t(\mathbf{x}, \mathbf{x}_i) + m_t(\mathbf{x}, \mathbf{x}_i),
    \]
    where $\gamma_t$ is a bounded, positive-definite kernel, and $m_t$ is a bounded function approximating a shift-invariant kernel determined by the sampling distribution $p(\omega)$. Specifically,
    \[
    \sup_{\mathbf{x},\mathbf{x}'} \gamma_t(\mathbf{x},\mathbf{x}') < \infty, \quad \text{and} \quad \sup_{\mathbf{x},\mathbf{x}'} m_t(\mathbf{x},\mathbf{x}') < \infty.
    \]
\end{enumerate}
\end{theorem}

\begin{proof}
We provide full proof in Appendix~\ref{app:proof-thm1}.
Here we provide intuition for the main idea. 

\paragraph{Step 1:  Why the plain NTK is unbounded.}
The key observation is that there exists a direction in the input space of the MLP satisfying Assumption~\ref{ass:main_assumption} such that the gradient grows at least linearly (for $t$-homogeneous activation) with the input norm. That is we have:
$$
\|\nabla_\theta f_\theta(\alpha \cdot \mathbf{x})\| \ge C_2 \, \alpha^{s^{m-1}},
\qquad\forall\;\alpha>0 .
$$

Because the neural–tangent kernel is the squared $2$‑norm of this same gradient,
$K_t(\alpha \mathbf{x}, \alpha \mathbf{x}) = \left\| \nabla_{\boldsymbol{\theta}}f_{\boldsymbol{\theta}_{t-1}}(\alpha \mathbf{x}) \right\|_2^2 \ge C_2^2 \alpha^{2s^{m-1}}$,
it grows with $\alpha$. Hence, we have
$\sup_{\mathbf{x},\mathbf{x}'}K_t(\mathbf{x},\mathbf{x}')=\infty .$

\paragraph{Step 2:  How Fourier features tame the explosion.}
Insert a random Fourier map $\Phi(\mathbf{x})= \sqrt{\tfrac{2}{D}}\bigl[\,
\sin(\Omega^\top \mathbf{x})\,,\cos(\Omega^\top \mathbf{x})
\bigr]$ as the new first “layer”.
Two facts change the picture:

\begin{enumerate}
\item \textbf{Input‐norm control.}
$\|\Phi(\mathbf{x})\|\le\sqrt2$ independently of $\|\mathbf{x}\|$.
Consequently, we can uniformly bound the gradient norm.

\item \textbf{Kernel decomposition.}
The full gradient splits into

$$
    \nabla_{\boldsymbol{\theta}} f_{\boldsymbol{\theta}_{t-1}}(\Phi(\mathbf{x})) = \left [ \nabla_{\bm{\psi}} g_{\bm{\psi}_{t-1}}(\mathbf{h}_1),  \frac{\partial f}{\partial \mathbf{h}^1_1} \Phi(\mathbf{x}), \dots, \frac{\partial f}{\partial \mathbf{h}^1_{d_1}} \Phi(\mathbf{x})\right],
$$
where $g$ is the truncated MLP consisting of layers $1, 2, \dots, m$. The first block involves only upper‑layer weights $\psi$ and remains bounded by point (1) above. The second block is linear in the Fourier features.  Using results of \citet{rahimi2007random} one shows
$\mathbb{E}_{\omega}[\Phi(\mathbf{x})^\top\Phi(\mathbf{x}')]=k(\mathbf{x}-\mathbf{x}')$,
so this part of the NTK behaves like a \emph{bounded} shift‑invariant kernel.
\end{enumerate}

Collecting the two blocks gives the announced decomposition

$$
\tilde K_t(\mathbf{x},\mathbf{x}') \;=\;
\underbrace{\gamma_t(\mathbf{x},\mathbf{x}')}_{\text{bounded}}
+\;
\underbrace{m_t(\mathbf{x},\mathbf{x}')}_{\text{shift‑inv.\ kernel}},
$$

with both summands uniformly bounded in $(\mathbf{x},\mathbf{x}')$.  This completes the sketch.

\end{proof}

\subsection{Implications of Theorem~\ref{thm:boundedness}}
\paragraph{Controlled gradients and stable training.}
A bounded kernel ensures that parameter gradients remain finite, even when encountering out-of-distribution or large-magnitude inputs. Practically, this prevents exploding gradient updates, enabling higher learning rates and fostering more stable and predictable optimization dynamics. This directly contributes to enhanced training stability and faster convergence in practice.

\paragraph{Regularisation through random features.}
Random Fourier features effectively function as an architectural regularizer by limiting the Lipschitz constant of initial layers. Simultaneously, they provide an unbiased approximation of a stable, shift-invariant kernel. Consequently, the network benefits from kernel-based smoothness properties without sacrificing depth, flexibility, or expressivity.

\paragraph{Design guidance for large-scale models.}
Our theoretical analysis offers practical design insights for developing large-scale neural network models. Specifically, it suggests incorporating bounded, norm-preserving feature mappings such as random Fourier features, positional encodings, or orthogonal projections when dealing with high-dimensional or high-magnitude inputs. This strategy mitigates kernel instability commonly encountered in deep, homogeneous architectures and enhances model robustness against distributional shifts.

\paragraph{Perspective on kernel–network hybrids.}
The kernel decomposition $\tilde K_t = \gamma_t + m_t$ reveals that neural networks with a random Fourier preprocessing step implicitly ensemble two kernels: (i) a learned, bounded kernel $\gamma_t$, which captures complex hierarchical feature relationships adaptively, and (ii) a fixed, shift-invariant kernel $m_t$, anchored directly to the input data structure. This hybrid interpretation elucidates why these networks can simultaneously exhibit rapid adaptation (through $\gamma_t$) and robust extrapolation capabilities (through $m_t$).

Theorem \ref{thm:boundedness} thus ensures that, after random Fourier pre-processing, the augmented kernel $\tilde K_t$ remains well-conditioned (avoiding extreme eigenvalues) and introduces a beneficial additive bias term $m_t$. This bias effectively pre-conditions the neural network towards the target function, simplifying the optimization path. Applying this insight to Eq.~\ref{eq:telescoping}, we obtain:

$$
\tilde f_{\boldsymbol{\theta}_T}(\mathbf{x}) = f_{\boldsymbol{\theta}_0}(\mathbf{x}) - \sum_{t=1}^T \sum_{i \in [n]} \bigl[\gamma_t(\mathbf{x},\mathbf{x}_i)+m_t(\mathbf{x},\mathbf{x}_i)\bigr] g_{it}^\ell,
$$

where the $\gamma_t$ term models adaptive, data-driven similarity, and the $m_t$ term expedites early training by providing a global offset. Together, these components explain why the inclusion of random Fourier features stabilizes the training process and significantly shortens the optimization trajectory, resulting in faster convergence without the need for additional learnable parameters.

\begin{figure}[t]
    \centering
    \includegraphics[width=\linewidth]{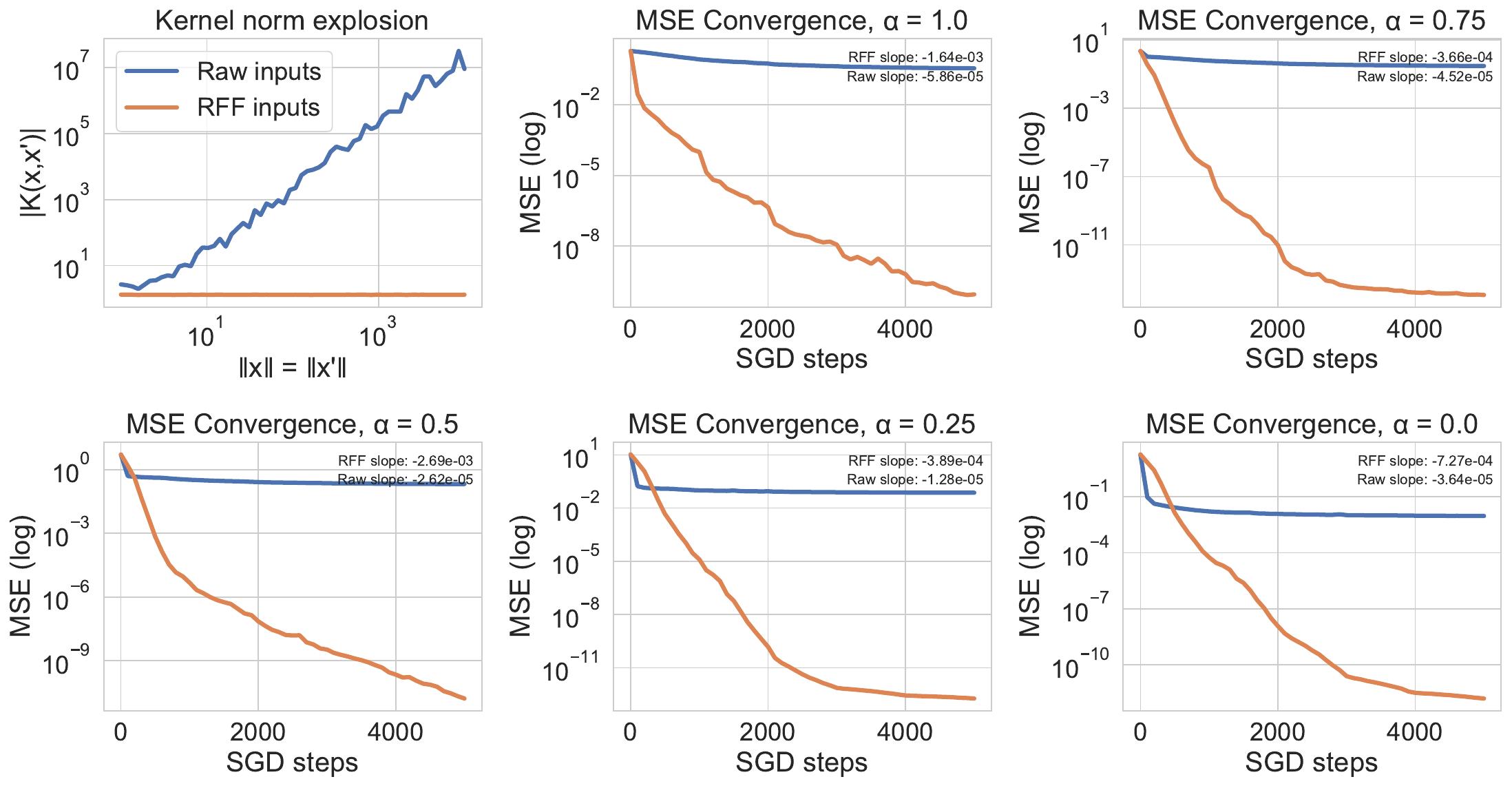}
    \caption{\textbf{Kernel norm explosion (first plot):} neural network kernel norm with and without the random Fourier features across different scales of the input features. \textbf{Neural network convergence:} convergence of the MSE of the neural network with and without the random Fourier features.}
    \label{fig:simul}
\end{figure}

\section{Simulations}

To illustrate Theorem~\ref{thm:boundedness}, we conducted two controlled simulation experiments detailed below.

\subsection{Synthetic Data with Adjustable Kernel Bias}

We generated synthetic data as $\mathbf{x} \sim \mathcal{N}(\mathbf{0}, I_d)$ with dimension $d=10$. Employing a fixed random Fourier feature mapping $\Phi: \mathbb{R}^d \to \mathbb{R}^{2D}$ (with $D=512$ and bandwidth $\sigma=1$), we constructed regression targets using a convex combination of raw linear and random Fourier features components:
\begin{equation}
y = (1 - \alpha)\langle w_{raw}, \mathbf{x} \rangle + \alpha \langle w_{rff}, \Phi(\mathbf{x}) \rangle + \varepsilon,
\end{equation}
where $\varepsilon \sim \mathcal{N}(0, 0.1^2)$ and $\alpha \in [0,1]$ controls kernel alignment. Specifically, $\alpha=1$ represents complete alignment with the kernel structure, $\alpha=0$ indicates no kernel structure, and intermediate values interpolate between these extremes.

\subsection{Experiment 1: Kernel Norm Explosion}

We initialized two identical ReLU multi-layer perceptrons (MLPs), each with two hidden layers of 128 units. The only distinction was in input processing: one received raw inputs directly, while the other utilized fixed random Fourier features pre-processing. We systematically varied the input radius $\|\mathbf{x}\| = \|\mathbf{x}'\|$ from $1$ to $10^{4}$ and calculated the neural tangent kernel (NTK) values.

\paragraph{Observation 1 (Kernel Boundedness).} The NTK of the raw-input MLP exhibited polynomial growth with increasing input norms, confirming the unbounded behavior described by Theorem~\ref{thm:boundedness}. Conversely, the NTK with random Fourier features pre-processing stabilized at finite values, consistent with the bounded decomposition $\tilde{K}_t = \gamma_t + m_t$.

\subsection{Experiment 2: Convergence Dynamics}

For kernel alignment levels $\alpha \in \{0, 0.5, 1\}$, we trained two model variants on $n=4,000$ samples:
\begin{itemize}
\item \textbf{Vanilla MLP}: Using raw inputs directly.
\item \textbf{RFF-MLP}: Identical architecture with fixed random Fourier features pre-processing.
\end{itemize}

We monitored the full-batch mean-squared error (MSE) at intervals of 100 iterations over 5,000 SGD steps. Figure~\ref{fig:simul} presents the learning curves, highlighting initial error bias (difference in MSE at iteration 0) and convergence rate (slope from linear regression of $\log(\text{MSE})$ against iterations).

\paragraph{Observation 2 (Bias Advantage).} When kernel structure was present ($\alpha > 0$), the MLP wtih random Fourier features consistently exhibited lower initial MSE at the onset of training.

\paragraph{Observation 3 (Stability Advantage).} Independent of $\alpha$, models using random Fourier features pre-processing consistently showed faster error reduction, demonstrating improved stability and less susceptibility to gradient explosions.

\begin{table*}[t]
\centering
\small
\resizebox{\textwidth}{!}{
\begin{tabular}{l|cccccc|cc}
\toprule
& \multicolumn{6}{c|}{\textbf{Classification Tasks (Accuracy $\uparrow$)}} & \multicolumn{2}{c}{\textbf{Regression Tasks (RMSE $\downarrow$)}} \\
\midrule
\textbf{Dataset} & Adult & Churn & Covertype & Gesture & Higgs & Santander & California & House \\
\textbf{Dataset size} & 48,842 & 7,043 & 495,141 & 9,873 & 98,050 & 709,877 & 20,640 & 1,000,000 \\
\textbf{Feature size} & 14 & 20 & 76,020 & 32 & 28 & 369 & 8 & 28 \\
\midrule
MLP (Raw) & 84.19 $\pm$ 0.06 & 85.28 $\pm$ 0.06 & 87.60 $\pm$ 0.04 & 54.80 $\pm$ 0.10 & 70.64 $\pm$ 0.07 & 91.15 $\pm$ 0.03 & 0.46 $\pm$ 0.00 & 0.61 $\pm$ 0.00 \\
\rowcolor{gray!20} MLP (Random) & 84.42 $\pm$ 0.06 & 86.48  $\pm$ 0.07 & 88.04 $\pm$ 0.12 & 56.23 $\pm$ 0.10 & 71.25 $\pm$ 0.07 & 91.83 $\pm$ 0.02 & 0.42 $\pm$ 0.00 & 0.58 $\pm$ 0.00 \\
\midrule
ModernNCA (Raw) & 83.98 $\pm$ 0.03 & 85.60 $\pm$ 0.08 & 91.71 $\pm$ 0.03 & 54.45 $\pm$ 0.25 & 71.90 $\pm$ 0.05 & 91.34 $\pm$ 0.01 & 0.42 $\pm$ 0.00 & 0.60 $\pm$ 0.00 \\
\rowcolor{gray!20} ModernNCA (Random) & 84.44 $\pm$ 0.05 & 86.34 $\pm$ 0.05 & 92.82 $\pm$ 0.04 & 56.62 $\pm$ 0.12 & 72.28 $\pm$ 0.06 & 92.10$\pm$ 0.02 & 0.40 $\pm$ 0.00 & 0.58 $\pm$ 0.00  \\
\midrule
TabTransformer (Raw) & 83.87 $\pm$ 0.05 & 83.58 $\pm$ 0.17 & 91.92 $\pm$ 0.09 & 56.38 $\pm$ 0.28 & 72.07 $\pm$ 0.11 & 91.02 $\pm$ 0.02 & 0.52 $\pm$ 0.00 & 0.62 $\pm$ 0.00 \\
\rowcolor{gray!20} TabTransformer (Random) & 85.30 $\pm$ 0.09 & 84.61 $\pm$ 0.20 & 93.33 $\pm$ 0.06 & 58.12 $\pm$ 0.26 & 72.22 $\pm$ 0.07 & 92.73$\pm$ 0.02 & 0.44 $\pm$ 0.01 & 0.53 $\pm$ 0.00 \\
\midrule
SAINT (Raw) & 85.43 $\pm$ 0.16 & 84.37 $\pm$ 0.28 & 99.13 $\pm$ 0.01 & 57.78 $\pm$ 0.25 & 72.38 $\pm$ 0.18 & 91.70 $\pm$ 0.05 & 0.46 $\pm$ 0.00 & 0.66 $\pm$ 0.00 \\
\rowcolor{gray!20} SAINT (Random) & 86.82 $\pm$ 0.14 & 85.12 $\pm$ 0.36 & 99.49 $\pm$ 0.01 & 59.26 $\pm$ 0.32 & 73.42 $\pm$ 0.16 & 93.02 $\pm$ 0.06 & 0.40 $\pm$ 0.00 & 0.62 $\pm$ 0.00 \\
\bottomrule
\end{tabular}}
\caption{Comparison of raw vs.\ random features across classification and regression tasks. Results are averaged over three trials. For classification tasks, higher accuracy ($\uparrow$) is better; for regression tasks, lower RMSE ($\downarrow$) is better.}
\label{tab:raw-random}
\end{table*}

\section{Experiments}
\label{sec: exp}
In this section, we present the results of our comprehensive experiments on multiple datasets to evaluate the effectiveness of random features. The section is structured in two parts. First, we provide preliminary details on the datasets, data preprocessing, and training procedures to ensure transparency and reproducibility. Second, we assess the performance of our proposed method against state-of-the-art deep learning and traditional models using established and widely used benchmark datasets \cite{gorishniy2021revisiting}.

\subsection{Preliminaries for Experiments}

\subsubsection{Datasets.}
We evaluate the performance of the proposed method on a standard benchmark introduced by \citet{gorishniy2021revisiting}. Specifically, the datasets include: California Housing \cite{pace1997sparse}, Adult \cite{kohavi1996scaling}, Churn \cite{ibm2019telco}, Higgs \cite{baldi2014searching}, Covertype \cite{blackard1999comparative}, Microsoft (MI) \cite{qin2013introducing}, Gestures \cite{openmlGesture4538}, Santander \cite{mondal_santander_2019}, California Housing (Inria) \cite{inria_california_housing}, and House \cite{openml_house_16h_574}. We report each dataset's size, number of features, and associated task in Table~\ref{tab:raw-random}.

\subsubsection{Preprocessing of Datasets}
The preprocessing pipeline addresses challenges in heterogeneous tabular data through systematic handling of missing values and feature-specific transformations. Numerical features are standardized to zero mean and unit variance, while categorical features are encoded using model-specific techniques: label encoding for transformer-based architectures such as SAINT and TabTransformer, and one-hot encoding for traditional models. To manage dimensionality, categorical features with more than n/10 unique values are excluded. Additionally, we apply stratified train-validation-test partitioning to preserve distributional characteristics across splits, ensuring reliable model evaluation on tabular benchmarks.

\subsubsection{Training Details}
We employ Adam optimization with $L^2$ regularization and task-specific loss functions: cross-entropy for classification and mean squared error (MSE) for regression. For SAINT, we specifically optimize transformer depth and attention heads while using a predefined attention mechanism (attention=`colrow'). Hyperparameter optimization is conducted using Optuna's Bayesian search with predefined seeds for reproducibility, ensuring systematic exploration of architectural configurations while maintaining consistent evaluation protocols across experimental runs.

\subsection{Results}
Table~\ref{tab:raw-random} illustrates the consistent performance advantage provided by random feature initialization over raw features across four distinct architectures and eight benchmark datasets. In classification tasks, the average accuracy improvements range between 0.79 and 1.25 percentage points (pp), highlighted by specific gains such as +1.43 pp for TabTransformer on Adult, +1.11 pp for ModernNCA on Covertype, and +1.74 pp for TabTransformer on Gesture. In regression tasks, the root mean squared error (RMSE) was reduced by an average of 11.7\%, with notable examples including a 13.0\% RMSE reduction for SAINT on California Housing and a 14.5\% reduction for TabTransformer on House. The single largest improvement observed was a +1.39 pp accuracy gain by SAINT on the Adult dataset, underscoring the robustness and effectiveness of the proposed random Fourier feature initialization approach.

\section{Limitations and Future Directions}

While our random Fourier feature-based approach demonstrates clear empirical and theoretical advantages, several limitations and promising avenues for future exploration remain. Theoretically, our current analysis is restricted to fully-connected neural networks and deterministic input mappings, without explicitly characterizing input distributions prone to kernel instability. Extensions to more complex neural architectures, such as convolutional and recurrent networks, along with comprehensive analyses of convergence behavior under diverse input distributions, could further validate and enrich our theoretical contributions. Additionally, quantifying how kernel misalignment impacts performance and determining the critical threshold of misalignment beyond which random Fourier feature models become harmful appears as an intriguing area for future research.

Practically, although the pre-processing step introduces negligible computational overhead, selecting optimal parameters (such as the bandwidth or dimensionality of the random Fourier features) still requires empirical tuning. Developing systematic and principled approaches for parameter selection would enhance the method's usability and scalability in real-world scenarios.

\section{Conclusion}
\label{sec: conclude}
We introduce a simple parameter free preprocessing step, random Fourier features, that can be plugged into any tabular deep learning model. Through neural tangent kernel (NTK) analysis we show that this transform (i) bounds the initial kernel spectrum to prevent ill conditioning and (ii) injects a bias term that shortens the effective optimization path. Empirically our approach accelerates training and yields consistent performance gains across diverse benchmarks without adding learnable parameters or significant runtime overhead. These results demonstrate that random Fourier features are a lightweight yet powerful regularizer that significantly accelerates convergence of DL on tabular tasks.

\bibliographystyle{apalike}
\bibliography{references}

\newpage

\newpage
\appendix

\section{Preliminary facts}

We repeat our notation and assumption set for convenience of the reader.

\paragraph{Notation}
Define the output of the $(k+1)$-th layer of an $m$-layer Multi-layer Perceptron (MLP) as
\begin{equation}
    \mathbf{h}^{k+1}=\sigma_k(W_k \mathbf{h}^{k}) \in \mathbb{R}^{d_{k+1}},
\end{equation}
with the initial input $\mathbf{h}^{0}=\mathbf{x} \in \mathbb{R}^d$. The final output is given by
\begin{equation}\label{equ:mlp}
    \hat{y} = f_\theta(x)=\mathbf{h}^{m}\in\mathbb R,
\end{equation}
with the weight matrices $W_k$ collected in $\theta=\{W_k\}_{k=0}^{m-1}$. 

We denote the Frobenius norm by $\|\cdot\|_F$, the matrix (spectral) 2-norm by $\|\cdot\|_2$, and the vector (Euclidean) 2-norm by $\|\cdot\|$. Furthermore, we denote vectorization by $\vec(\mathbf{A})$.

\setcounter{assumption}{0}
\begin{assumption}
\label{ass:main_assumption}
We make the following standard assumptions:
\begin{enumerate}
    \item The Multi-Layer Perceptron $f_{\bm{\theta}}(\cdot)$ is non-degenerate, i.e., $f_{\bm{\theta}}(\mathbf{x}) \neq \text{const}$ for all $\mathbf{x} \in \mathbb{R}^d$.
    \item Each activation function $\sigma_k$ in every layer $k$ of the neural network is $1$‑Lipschitz and $s$‑homogeneous.
\end{enumerate}
\end{assumption}

\begin{lemma}\label{lemma:homogenuous}

Suppose $f_{\bm{\theta}}$ is the  MLP satisfying Assumption~\ref{ass:main_assumption}. Then, $f_{\bm{\theta}}$ is $s^m$-homogeneous. Further, the gradient with respect to the $k^{th}$-hidden units can be written as:
\begin{equation}
    \nabla_{\mathbf{h}^k}f_{\bm{\theta}}(\alpha \, \mathbf{x}) = \alpha^{s^{m-k}-1} \nabla_{\mathbf{h}^k}f_{\bm{\theta}}(\mathbf{x})
\end{equation}
\end{lemma}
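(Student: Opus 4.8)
The plan is to prove both assertions by induction on depth, using the single structural fact that composing a linear map with an $s$‑homogeneous activation multiplies the homogeneity degree by $s$. First I would establish the layerwise scaling law $\mathbf{h}^{k}(\alpha\mathbf{x})=\alpha^{s^{k}}\mathbf{h}^{k}(\mathbf{x})$ for every $\alpha>0$ by induction on $k$. The base case $k=0$ is immediate since $\mathbf{h}^{0}=\mathbf{x}$ and $s^{0}=1$. For the inductive step I would use linearity of $W_k(\cdot)$ to factor the scalar out of the pre‑activation, $W_k(\alpha^{s^{k}}\mathbf{h}^{k})=\alpha^{s^{k}}W_k\mathbf{h}^{k}$, and then invoke $s$‑homogeneity of $\sigma_k$ to obtain $\sigma_k(\alpha^{s^{k}}W_k\mathbf{h}^{k})=(\alpha^{s^{k}})^{s}\sigma_k(W_k\mathbf{h}^{k})=\alpha^{s^{k+1}}\mathbf{h}^{k+1}$. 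Setting $k=m$ gives $f_{\bm\theta}(\alpha\mathbf{x})=\alpha^{s^{m}}f_{\bm\theta}(\mathbf{x})$, which is exactly $s^{m}$‑homogeneity; non‑degeneracy (Assumption~\ref{ass:main_assumption}.1) is only used to guarantee the statement is non‑vacuous, i.e.\ that $f_{\bm\theta}$ is not the zero map.

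For the gradient identity, the key device is to factor the network at layer $k$. I would write $f_{\bm\theta}=g_k(\mathbf{h}^{k})$, where $g_k:\mathbb{R}^{d_k}\to\mathbb{R}$ denotes the tail subnetwork made of layers $k,\dots,m-1$. Running the same induction on this subnetwork---which contains exactly $m-k$ homogeneous layers---shows that $g_k$ is $s^{m-k}$‑homogeneous, i.e.\ $g_k(\alpha\mathbf{v})=\alpha^{s^{m-k}}g_k(\mathbf{v})$ for all $\mathbf{v}$ and $\alpha>0$. Differentiating this scalar identity with respect to $\mathbf{v}$ then transfers the homogeneity to the gradient: the chain rule on the left produces $\alpha\,(\nabla g_k)(\alpha\mathbf{v})$ while the right produces $\alpha^{s^{m-k}}(\nabla g_k)(\mathbf{v})$, so cancelling one power of $\alpha$ yields $(\nabla g_k)(\alpha\mathbf{v})=\alpha^{s^{m-k}-1}(\nabla g_k)(\mathbf{v})$. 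Since $\nabla_{\mathbf{h}^{k}}f_{\bm\theta}=\nabla g_k$, this is precisely the claimed relation, with the scaling understood to act on the layer‑$k$ representation $\mathbf{h}^{k}$ that $g_k$ receives as input. Equivalently, one may quote Euler's theorem for homogeneous functions to pass from degree $s^{m-k}$ on $g_k$ to degree $s^{m-k}-1$ on its gradient.

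The hard part will be the differentiation step, because the canonical $s$‑homogeneous activation is ReLU ($s=1$), which is not differentiable at its kink. I would address this by observing that the derivative of an $s$‑homogeneous $\sigma$ is itself $(s-1)$‑homogeneous wherever it exists, that the locus of non‑differentiability is a measure‑zero union of hyperplanes, and that the dilation $\mathbf{v}\mapsto\alpha\mathbf{v}$ maps this null set to itself; hence the identity holds for all $\mathbf{v}$ off a null set, or in a subgradient sense everywhere, which is all that the downstream bounds in Theorem~\ref{thm:boundedness} require. The remaining pitfall is purely bookkeeping: I must be careful that the tail $g_k$ carries degree $s^{m-k}$ (not $s^{m}$ or $s^{m-k+1}$), since an off‑by‑one in this exponent is the most likely source of error and would propagate directly into the gradient degree $s^{m-k}-1$.
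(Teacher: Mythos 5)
Your argument mirrors the paper's proof: both establish $s^{m-k}$-homogeneity of the truncated tail network $g^k$ by induction on depth (using that composing with an $s$-homogeneous activation multiplies the degree by $s$) and then differentiate the homogeneity identity to transfer degree $s^{m-k}$ on $g^k$ to degree $s^{m-k}-1$ on its gradient. Your added care about where the scaling acts (on the layer-$k$ representation) and about non-differentiability of ReLU-type activations are sensible refinements of the same route, not a different proof.
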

\begin{proof}

    Define the truncated neural network $g^k$ where which takes as an input $\mathbf{h}^k$ and passes it through the layers $k, k+1, \dots, m$. In particular, we have $g^0(\mathbf{x}) = f_{\bm{\theta}}(\mathbf{x})$ and $g^m(\mathbf{h}^m) = \mathbf{h}^m$. 

    We proceed to prove the claim by induction. In the base case, we have $g^m$ which is clearly $1$-homogeneous. Suppose that $g^{k+1}$ is $d^{m-k-1}$-homogeneous. Then we have:
    \begin{equation}
        \begin{split}
            g^{k} (\alpha \, \mathbf{h}^k) &= g^{k+1} (\sigma(W_k \alpha \, \mathbf{h}^k)) \\
            &= g^{k+1} (\alpha^s\, \sigma(W_k \mathbf{h}^k)) \\
            &= (\alpha^s)^{s^{m-k-1}} g^{k} (\mathbf{h}^k)).
        \end{split}
    \end{equation}
    Hence, we have that $g^k$ is $s^{m-k}$-homogeneous, proving the induction.

    Recalling that if $g$ is $s$-homogeneous then $\nabla_{\mathbf{h}}g(\alpha \, \mathbf{h}) = \alpha^{s-1} \nabla_{\mathbf{h}}g(\mathbf{h})$, we have that:
    \begin{equation}
        \nabla_{\mathbf{h}^k}f_{\bm{\theta}}(\alpha \, \mathbf{x}) = \alpha^{s^{m-k}-1} \nabla_{\mathbf{h}^k}f_{\bm{\theta}}(\mathbf{x}).
    \end{equation}    
\end{proof}


\begin{lemma}
\label{lemma:backprop}
Define the forward prefix up to layer $k$ and backward suffix from layer $k$ as:
\begin{equation}
    S_k = \prod_{i=0}^{k-1}\|W_i\|_2, \quad T_k = \prod_{i=k}^{m-1}\|W_i\|_2,
\end{equation}
where $S_0 = 1$ and $T_m = 1$. Then, for any MLP satisfying Assumption \ref{ass:main_assumption}, the following inequalities for the hidden layers hold:
\begin{align}
    \|\mathbf{h}^k\| \leq S_k \|\mathbf{x}\| \qquad
    \|\nabla_{\mathbf{h}^{k}} f_{\bm{\theta} }(\mathbf{x})\| \leq T_{k} 
\end{align}

\end{lemma}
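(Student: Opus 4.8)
The plan is to prove the two estimates separately, each by a short induction built on operator-norm submultiplicativity: the bound on $\|\mathbf{h}^k\|$ by \emph{forward} induction from the input, and the bound on $\|\nabla_{\mathbf{h}^k}f_{\bm{\theta}}\|$ by \emph{backward} induction from the output. For the forward bound I would first record a consequence of Assumption~\ref{ass:main_assumption} that is needed but not quite stated: an $s$-homogeneous map satisfies $\sigma_k(\mathbf{0})=\mathbf{0}$, because $\sigma_k(\mathbf{0})=\sigma_k(2\cdot\mathbf{0})=2^{s}\sigma_k(\mathbf{0})$ forces $\sigma_k(\mathbf{0})=\mathbf{0}$ whenever $s\neq 0$; combined with $1$-Lipschitzness this gives the pointwise contraction $\|\sigma_k(\mathbf{z})\|=\|\sigma_k(\mathbf{z})-\sigma_k(\mathbf{0})\|\le\|\mathbf{z}\|$. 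The base case $\|\mathbf{h}^0\|=\|\mathbf{x}\|=S_0\|\mathbf{x}\|$ holds since $S_0=1$ (empty product), and the inductive step is
\[
\|\mathbf{h}^{k+1}\|=\|\sigma_k(W_k\mathbf{h}^k)\|\le\|W_k\mathbf{h}^k\|\le\|W_k\|_2\|\mathbf{h}^k\|\le\|W_k\|_2 S_k\|\mathbf{x}\|=S_{k+1}\|\mathbf{x}\|,
\]
using in turn the contraction, operator-norm submultiplicativity, the induction hypothesis, and the telescoping identity $S_{k+1}=\|W_k\|_2 S_k$.

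For the gradient bound the central object is the layer Jacobian. Differentiating $\mathbf{h}^{k+1}=\sigma_k(W_k\mathbf{h}^k)$ gives $\partial\mathbf{h}^{k+1}/\partial\mathbf{h}^k=D_kW_k$ with $D_k=\mathrm{diag}(\sigma_k'(W_k\mathbf{h}^k))$; since $1$-Lipschitzness forces $|\sigma_k'|\le 1$, we obtain $\|D_k\|_2\le1$ and hence $\|D_kW_k\|_2\le\|W_k\|_2$. The chain rule $\nabla_{\mathbf{h}^k}f_{\bm{\theta}}=(D_kW_k)^\top\nabla_{\mathbf{h}^{k+1}}f_{\bm{\theta}}$, together with invariance of the spectral norm under transpose, then yields the contraction $\|\nabla_{\mathbf{h}^k}f_{\bm{\theta}}\|\le\|W_k\|_2\|\nabla_{\mathbf{h}^{k+1}}f_{\bm{\theta}}\|$. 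The base case is $\nabla_{\mathbf{h}^m}f_{\bm{\theta}}=1$ because $f_{\bm{\theta}}=\mathbf{h}^m\in\mathbb{R}$, matching $T_m=1$, and backward induction via $T_k=\|W_k\|_2 T_{k+1}$ closes the estimate.

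The calculations above are genuinely routine; the two points I expect to require the most care both concern the regularity of $\sigma_k$. First, the clean contraction $\|\sigma_k(\mathbf{z})\|\le\|\mathbf{z}\|$ really does need $\sigma_k(\mathbf{0})=\mathbf{0}$ — without it $1$-Lipschitzness only gives $\|\sigma_k(\mathbf{z})\|\le\|\mathbf{z}\|+\|\sigma_k(\mathbf{0})\|$ — so I would make the homogeneity-to-origin argument explicit rather than silently assume it. Second, the gradient bound presupposes differentiability at $\mathbf{x}$, which fails on a measure-zero set for activations such as ReLU; I would resolve this by reading $\sigma_k'$ as a fixed measurable selection from the Clarke subdifferential (any selection obeys $|\sigma_k'|\le1$), so that $\|D_k\|_2\le1$ and the entire backward recursion remain valid for the generalized gradient used throughout the rest of the analysis.
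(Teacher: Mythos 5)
Your proof is correct and follows essentially the same route as the paper's: forward induction with the contraction $\|\sigma_k(\mathbf{z})\|\le\|\mathbf{z}\|$ and submultiplicativity for the first bound, and backward induction through the layer Jacobian $D_kW_k$ with $\|D_k\|_2\le 1$ for the second. Your two added points of care --- deriving $\sigma_k(\mathbf{0})=\mathbf{0}$ explicitly from homogeneity (the paper silently uses this when it writes $\sigma(W_k\mathbf{0})$) and reading $\sigma_k'$ as a Clarke subdifferential selection for activations like ReLU --- are welcome tightenings of steps the paper leaves implicit, but they do not change the argument.
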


\begin{proof} For both proofs, we proceed by induction on the index of the layer. 

For the first inequality, since $\mathbf{h}^0 = \mathbf{x}$ by definition, the inequality holds. Supposing it holds for $k$, we write:
\begin{equation}
    \begin{split}
        \|\mathbf{h}^{k+1}\| &= \|\sigma(W_k \mathbf{h}^{k}) - \sigma(W_k \mathbf{0})\| \\
    &\stackrel{(i)}{\le} \|W_k \mathbf{h}^{k}\| \\
    &\stackrel{(ii)}{\le} \|W_k \|_2 \|\mathbf{h}^{k}\| \\
    &\stackrel{(iii)}{\le} \|W_k \|_2 S_k \|\mathbf{x}\| = S_{k+1}\|\mathbf{x}\|,    
    \end{split}
\end{equation}
where $(i)$ holds by 1-Lipschitzness of the activation function by Assumption~\ref{ass:main_assumption}, $(ii)$ holds by the properties of the 2-norm and $(iii)$ is the induction hypothesis.

Similarly, for the second inequality, since $f_{\bm{\theta}} = \mathbf{h}^{m}$, the base case holds. Define $D_k = \text{diag}[\sigma^\prime]$ Now supposing the inequality holds for $k+1$
Similarly, for the second inequality, since $f_{\bm{\theta}} = \mathbf{h}^{m}$, the base case holds. Define $D_k = \text{diag}[\{\sigma^\prime(\mathbf{h}^k_i)\}_i]$. Now supposing the inequality holds for $k+1$, we have:
\begin{equation}
    \begin{split}
       \|\nabla_{\mathbf{h}^k} f_{\bm{\theta} }(\mathbf{x})\| &= \|(\nabla_{\mathbf{h}^{k+1}} f_{\bm{\theta} }(\mathbf{x}))^\top D_{k} W_{k}\| \\
       &\leq \|\nabla_{\mathbf{h}^{k+1}}f_{\bm{\theta}}(\mathbf{x})\| \|D_{k}\|_2 \|W_{k}\|_2 \\
       &\stackrel{(i)}{\leq} \|\nabla_{\mathbf{h}^{k+1}}f_{\bm{\theta}}(\mathbf{x})\|\|W_{k}\|_2 \\
       &\stackrel{(ii)}{\leq} T_{k+1} \|W_{k}\|_2  = T_{k},
    \end{split}
\end{equation}
where $(i)$ holds by 1-Lipschitzness of the activation function by Assumption~\ref{ass:main_assumption} and $(ii)$ holds by the induction hypothesis. 
\end{proof}


The following Theorem follows from Lemma 2 and Lemma 3, which establish upper and lower bounds on the norm of the gradient of the neural network.

\begin{theorem}
\label{thm:bounds}    
    Define the vectorized gradient of the MLP with respect to its parameters as $\nabla_\theta f_\theta(\cdot)$. 
    \begin{enumerate}
        \item  There exists a constant $C_1 > 0$ such that
    \begin{equation}
      \|\nabla_\theta f_\theta(\mathbf{x})\| \leq C_1 \, \|\mathbf{x}\|,
    \end{equation}
    where $C_1>0$ depends only on the network weights.
    \item  In addition, there exists a vector $\mathbf{x}$ and a constant $C_2 > 0$ such that for any $\alpha > 1$, we have:
    \begin{equation}
        \|\nabla_\theta f_\theta(\alpha \cdot \mathbf{x})\| \ge C_2 \, \alpha^{s^{m-1}},
    \end{equation}
    where $C_2>0$ does not depend on $\alpha$.
    \end{enumerate}
\end{theorem}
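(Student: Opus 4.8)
The plan is to handle the two parts separately, reducing each to the per‑layer estimates already supplied by Lemmas~\ref{lemma:backprop} and~\ref{lemma:homogenuous}. The structural fact I would use throughout is that the parameter gradient splits block‑wise over layers, and the block attached to a weight matrix $W_k$ is a rank‑one outer product,
\[
\nabla_{W_k} f_{\bm\theta}(\mathbf{x}) = \bigl(\nabla_{\mathbf{z}^{k+1}} f_{\bm\theta}(\mathbf{x})\bigr)\,(\mathbf{h}^k)^\top,
\qquad \mathbf{z}^{k+1} = W_k\mathbf{h}^k,
\]
so that $\|\nabla_{W_k} f_{\bm\theta}(\mathbf{x})\|_F = \|\nabla_{\mathbf{z}^{k+1}} f_{\bm\theta}(\mathbf{x})\|\,\|\mathbf{h}^k\|$ and $\|\nabla_{\bm\theta} f_{\bm\theta}\|^2 = \sum_{k=0}^{m-1}\|\nabla_{W_k} f_{\bm\theta}\|_F^2$. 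Both bounds then follow by controlling the forward factor $\mathbf{h}^k$ and the backward factor $\nabla_{\mathbf{z}^{k+1}} f$ separately.

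For the upper bound (part 1) I would bound each factor by a quantity from Lemma~\ref{lemma:backprop}. Writing $\nabla_{\mathbf{z}^{k+1}} f = D_k \nabla_{\mathbf{h}^{k+1}} f$ with $\|D_k\|_2 \le 1$ by $1$‑Lipschitzness, the backward factor obeys $\|\nabla_{\mathbf{z}^{k+1}} f\| \le \|\nabla_{\mathbf{h}^{k+1}} f\| \le T_{k+1}$, while the forward factor obeys $\|\mathbf{h}^k\| \le S_k\|\mathbf{x}\|$. Multiplying, squaring, and summing over $k$ gives $\|\nabla_{\bm\theta} f_{\bm\theta}(\mathbf{x})\| \le C_1\|\mathbf{x}\|$ with $C_1 = \bigl(\sum_{k=0}^{m-1} S_k^2 T_{k+1}^2\bigr)^{1/2}$, which depends only on the spectral norms of the weights. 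This part is routine once the rank‑one decomposition is in place.

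For the lower bound (part 2), rather than tracking the whole gradient I would lower‑bound $\|\nabla_{\bm\theta} f\|$ by the single last‑layer block, which since the output is scalar reads $\|\nabla_{W_{m-1}} f\|_F = |\sigma_{m-1}'(\mathbf{z}^m)|\,\|\mathbf{h}^{m-1}\|$. The homogeneity established in the course of Lemma~\ref{lemma:homogenuous} gives $\|\mathbf{h}^{m-1}(\alpha\mathbf{x})\| = \alpha^{s^{m-1}}\|\mathbf{h}^{m-1}(\mathbf{x})\|$, and since $\sigma_{m-1}'$ is $(s-1)$‑homogeneous with $s \ge 1$, the derivative factor is non‑decreasing in $\alpha$ for $\alpha > 1$, hence bounded below by its value at $\mathbf{x}$. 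Fixing a direction $\mathbf{x}$ for which both $\mathbf{h}^{m-1}(\mathbf{x})\neq 0$ and $\sigma_{m-1}'(\mathbf{z}^m(\mathbf{x}))\neq 0$, I would read off $\|\nabla_{\bm\theta} f_{\bm\theta}(\alpha\mathbf{x})\| \ge C_2\,\alpha^{s^{m-1}}$ with $C_2 = |\sigma_{m-1}'(\mathbf{z}^m(\mathbf{x}))|\,\|\mathbf{h}^{m-1}(\mathbf{x})\|$, independent of $\alpha$.

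The main obstacle is guaranteeing that such a direction exists, i.e.\ that $C_2 > 0$, and this is exactly where the non‑degeneracy in Assumption~\ref{ass:main_assumption} enters. If every $\mathbf{x}$ forced either $\mathbf{h}^{m-1}(\mathbf{x}) = 0$ or $\sigma_{m-1}'(\mathbf{z}^m(\mathbf{x})) = 0$, then the final pre‑activation $\mathbf{z}^m$ would be confined to a region on which $\sigma_{m-1}$ is constant, making $f_{\bm\theta}$ constant and contradicting non‑degeneracy; I would turn this dichotomy into an explicit construction of an active direction. A secondary technical nuisance is that activations with kinks (e.g.\ ReLU) have $\sigma'$ defined only almost everywhere, which I would handle by choosing $\mathbf{x}$ in the interior of an activation region (or passing to subgradients), a choice that affects neither $C_2$ nor the scaling exponent.
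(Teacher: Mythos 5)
Your proof is correct and follows essentially the same route as the paper: the block-wise rank-one decomposition $\nabla_{W_k}f=(\nabla f)(\mathbf{h}^k)^\top$ combined with the forward/backward bounds of Lemma~\ref{lemma:backprop} for Part~1 (your $C_1=\bigl(\sum_k S_k^2T_{k+1}^2\bigr)^{1/2}$ is even slightly tighter than the paper's $\sum_k S_kT_{k+1}$), and a single-block lower bound via homogeneity plus non-degeneracy for Part~2. The only real variation is that you isolate the \emph{last}-layer block and use forward homogeneity of $\mathbf{h}^{m-1}$, whereas the paper isolates the \emph{first}-layer block and uses the backward identity $\nabla_{\mathbf{h}^1}f_{\bm\theta}(\alpha\mathbf{x})=\alpha^{s^{m-1}-1}\nabla_{\mathbf{h}^1}f_{\bm\theta}(\mathbf{x})$ from Lemma~\ref{lemma:homogenuous}; both yield the exponent $\alpha^{s^{m-1}}$, and your treatment of the activation-derivative factor and of the non-degeneracy dichotomy guaranteeing $C_2>0$ is, if anything, more careful than the paper's.
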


\begin{proof}
We compute the partial derivative with respect to the weight $W_k$ at layer $k$:
\begin{equation}\label{equ:fro_ineq}
\begin{split}
    \Bigl\|\frac{\partial f_\theta(\mathbf{x})}{\partial W_k}\Bigr\|_F &= \Bigl\| \nabla_{\mathbf{h}^{k+1}} f_\theta(\mathbf{x}) \, (\mathbf{h}^k)^\top\Bigr\|_F \\
    &\stackrel{(i)}{=}  \| \nabla_{\mathbf{h}^{k+1}} f_\theta(\mathbf{x})\| \, \| \mathbf{h}^k\|\\
    &\stackrel{(ii)}{\le} T_{k+1} S_k \|\mathbf{x}\|,
\end{split}
\end{equation}
where $(i)$ follows from the definition property of the Frobenius and vector 2-norms, $(ii)$ uses Lemma~\ref{lemma:backprop}. Finally,
\begin{equation}
    \begin{split}
        \| \nabla_\theta \, f_\theta(\mathbf{x})\| &= \sqrt{ \sum_{k=0}^{m-1} \Bigl\| \tfrac{\partial f_\theta(\mathbf{x})}{\partial W_k} \Bigr\|_F^2 } \\
        &\stackrel{(i)}{\le} \sum_{k=0}^{m-1} \Bigl\| \tfrac{\partial f_\theta(\mathbf{x})}{\partial W_k} \Bigr\|_F \\
        &\stackrel{(ii)}{\le} \sum_{k=0}^{m-1}  T_{k+1}  \, S_k \, \|\mathbf{x}\|,
    \end{split}
\end{equation}
where $(i)$ follows from the triangle inequality and $(ii)$ from the bound above. Defining $C_1 = \sum_{k=0}^{m-1}  T_{k+1}  \, S_k$ completes the proof of Part 1.

Next, we are going to prove the lower bound. First, from the definition of $\nabla_\theta f_\theta(\cdot)$, we have
\begin{equation}
\begin{split}
    \| \nabla_\theta f_\theta(\alpha \mathbf{x}) \| &\ge \|\nabla_{\mathbf{h}^1} \, f_\theta(\alpha \, \mathbf{x}) \, \alpha \, \mathbf{x}^\top  \|_F \\&= \| \nabla_{\mathbf{h}^1} \, f_\theta(\alpha \cdot \mathbf{x})\| \, \|\alpha \mathbf{x}\|\\
    &\stackrel{(i)}{=} \| \alpha^{s^{m-1}-1} \, \nabla_{\mathbf{h}^1} \, f_\theta(\mathbf{x})\| \, \| \alpha \, \mathbf{x}\| = \alpha^{s^{m-1}} \,  \|\nabla_{\mathbf{h}^1} \, f_\theta(\mathbf{x})\|  \, \| \mathbf{x}\|,
\end{split}
\end{equation}
where $(i)$ is from Lemma \ref{lemma:homogenuous}. Then, from the non-degenerateness in Assumption \ref{ass:main_assumption}, we can select $\mathbf{x}\neq \mathbf{0}$ such that $\|\nabla_{\mathbf{h}^1} \, f_\theta(\mathbf{x})\| > 0$. Choosing $C_2 = \,\|\nabla_{\mathbf{h}^1} \, f_\theta(\mathbf{x})\| \|\mathbf{x}\|$, which completes the proof of the second part.

\end{proof}

\begin{lemma}[Theorem 1 of \citet{rahimi2007random}]\label{lem:bochner}
Let $k(\delta)$ be any shift‑invariant, real‑valued, positive‑definite kernel on $\mathbb{R}^{d}$ with Fourier transform $p(\omega)$ scaled so that $\int p(\omega)d\omega=1$.  Define $\zeta_{\omega}(\mathbf{x}) := e^{i \omega^{\top}\mathbf{x}}$.  Then Bochner \citep{} theorem guarantees that:
\begin{equation}
\mathbb{E}_{\omega}[\,\zeta_{\omega}(\mathbf{x})\,\overline{\zeta_{\omega}(\mathbf{y})}\,] = k(\mathbf{x}-\mathbf{y}), \qquad\forall\,\mathbf{x},\mathbf{y}\in\mathbb{R}^{d}.    
\end{equation}

Consequently, the random variable $\zeta_{\omega}(\mathbf{x})\overline{\zeta_{\omega}(\mathbf{y})}$ is an unbiased estimator of the kernel value.
\end{lemma}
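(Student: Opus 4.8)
The plan is to reduce the statement entirely to Bochner's theorem followed by a one-line expectation computation, since the lemma is quoted verbatim from \citet{rahimi2007random} and the phrase ``Bochner theorem guarantees that'' already signals where the substantive content lies. First I would invoke Bochner's theorem itself: a continuous shift-invariant kernel $k(\boldsymbol{\delta})$ on $\mathbb{R}^{d}$ is positive-definite if and only if it is the Fourier transform of a finite non-negative measure, i.e.\ there exists a non-negative measure $\mu$ with $k(\boldsymbol{\delta}) = \int_{\mathbb{R}^d} e^{i\omega^{\top}\boldsymbol{\delta}}\,d\mu(\omega)$. This is the only nontrivial ingredient, and I would treat it as a cited black box, since its proof lives in harmonic analysis and is outside the scope of the present argument.

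Next I would handle the normalization so that the expectation is well defined. Evaluating the spectral representation at $\boldsymbol{\delta} = \mathbf{0}$ gives $k(\mathbf{0}) = \int d\mu(\omega)$, so the hypothesis $\int p(\omega)\,d\omega = 1$ is precisely the statement that $k$ has been scaled so that $\mu$ is a \emph{probability} measure with density $p$. With $p$ a bona fide probability density, the symbol $\mathbb{E}_{\omega}[\cdot]$ acquires its intended meaning as integration against $p(\omega)\,d\omega$, and the core computation becomes immediate:
\[
\mathbb{E}_{\omega}\!\left[\zeta_{\omega}(\mathbf{x})\,\overline{\zeta_{\omega}(\mathbf{y})}\right]
= \mathbb{E}_{\omega}\!\left[e^{i\omega^{\top}\mathbf{x}}\,e^{-i\omega^{\top}\mathbf{y}}\right]
= \int_{\mathbb{R}^d} e^{i\omega^{\top}(\mathbf{x}-\mathbf{y})}\,p(\omega)\,d\omega
= k(\mathbf{x}-\mathbf{y}),
\]
where the final equality is exactly the Bochner representation with $\boldsymbol{\delta} = \mathbf{x}-\mathbf{y}$. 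The unbiasedness claim is then a restatement rather than a separate argument: because the expectation of $\zeta_{\omega}(\mathbf{x})\overline{\zeta_{\omega}(\mathbf{y})}$ equals $k(\mathbf{x}-\mathbf{y})$, the random variable is by definition an unbiased estimator of the kernel value.

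The one subtlety I would make explicit is the role of the \emph{real-valuedness} of $k$. Because $k$ is real, the spectral density inherits the symmetry $p(\omega)=p(-\omega)$, which forces the imaginary part of $\int e^{i\omega^{\top}(\mathbf{x}-\mathbf{y})}p(\omega)\,d\omega$ to cancel, so the representation indeed returns a real number as required. This same symmetry is what later licenses replacing the complex feature $\zeta_{\omega}$ by the real-valued $\sin/\cos$ pair appearing in $\Phi$, a point worth flagging even though it is not strictly needed for the displayed identity. The main obstacle is therefore not any step in this chain but the invocation of Bochner's theorem itself; once it is granted, everything downstream is a single direct integral evaluation, and the hardest ``work'' is simply ensuring the normalization and symmetry conditions are stated cleanly enough that the expectation is legitimate.
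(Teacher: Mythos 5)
Your proposal is correct and coincides with the standard argument behind Theorem~1 of \citet{rahimi2007random}: the paper itself offers no proof of this lemma, treating it as a cited black box, and your reconstruction (Bochner's theorem plus the normalization $k(\mathbf{0})=\int p(\omega)\,d\omega=1$ turning the spectral measure into a probability density, followed by the one-line expectation computation) is exactly what that citation packages. Your remark on the symmetry $p(\omega)=p(-\omega)$ forced by real-valuedness also correctly anticipates the paper's follow-up comment that the real $\sin/\cos$ embedding $\Phi$ yields an unbiased estimate of $k$.
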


A real embedding is obtained by the usual trigonometric identity $e^{iz}=\cos z+ i \sin z$. Hence, the inner product $\Phi^\top(\mathbf{x}) \Phi(\mathbf{x})$ is an unbiased estimate of the kernel $k$.

\section{Proof of Theorem 1}
\label{app:proof-thm1}
\setcounter{theorem}{0}

\begin{theorem}[Kernel Boundedness and Bias]
\label{thm:boundedness}
Consider kernels $K_t$ and $\tilde K_t$ associated with a neural network $f_\theta: \mathbf{x} \to \hat{y}$, trained without and with random Fourier features $\Phi(\mathbf{x})$, respectively. Then, the following properties hold:
\begin{enumerate}
    \item As the input feature becomes unbounded, i.e., $\| \mathbf{x}\| \to \infty$, the kernel $K_t$ becomes unbounded, specifically,
    \[
    \sup_{\mathbf{x},\mathbf{x}'} K_t(\mathbf{x},\mathbf{x}') = \infty.
    \]
    \item Under Assumption \ref{ass:main_assumption}, the kernel $\tilde K_t$ admits an upper-bound decomposition:
    \[
    \tilde K_t(\mathbf{x}, \mathbf{x}_i) = \gamma_t(\mathbf{x}, \mathbf{x}_i) + m_t(\mathbf{x}, \mathbf{x}_i),
    \]
    where $\gamma_t$ is a bounded, positive-definite kernel, and $m_t$ is a bounded function approximating a shift-invariant kernel determined by the sampling distribution $p(\omega)$. Specifically,
    \[
    \sup_{\mathbf{x},\mathbf{x}'} \gamma_t(\mathbf{x},\mathbf{x}') < \infty, \quad \text{and} \quad \sup_{\mathbf{x},\mathbf{x}'} m_t(\mathbf{x},\mathbf{x}') < \infty.
    \]
\end{enumerate}
\end{theorem}

\begin{proof} For \textbf{unboundedness}, we apply Theorem 1. Specifically, there exists $\mathbf{x}$ such that:
\begin{equation}
    K_t(\alpha \, \mathbf{x}, \alpha \, \mathbf{x}) = \left\| \nabla_{\boldsymbol{\theta}}f_{\boldsymbol{\theta}_{t-1}}(\alpha \, \mathbf{x}) \right\|_2^2 \ge C_2^2 \alpha^{2s^{m-1}}. 
\end{equation}
Therefore, we have that \(\sup_{\mathbf{x},\mathbf{x}'}K_t(\mathbf{x},\mathbf{x}')=\infty\). 

To show the \textbf{decomposition} of $\tilde K_t$, first define the truncated neural network $g$ which takes as an input $\mathbf{h}^1$ and passes it through the layers $1, 2, \dots, m$. Denote $\bm{\psi}_{t-1} = \{W_k\}_{k=1}^{m-1}$ the parameters of $g$. Then, we write the vectorized gradient as: 
\begin{align}
    \nabla_{\boldsymbol{\theta}} f_{\boldsymbol{\theta}_{t-1}}(\Phi(\mathbf{x})) &= \left[\vect{\left(\frac{\partial f}{\partial W_{k-1}}\right)}, \dots, \vect\left(\frac{\partial f}{\partial W_{1}}\right), \vect\left(\frac{\partial f}{\partial W_{0}}\right)\right] \\
    &= \left [ \nabla_{\bm{\psi}} g_{\bm{\psi}_{t-1}}(\mathbf{h}_1),  \vect\left(\nabla_{\mathbf{h}^1} f_{\bm{\theta}^{t-1}}(\Phi(\mathbf{x})) \Phi(\mathbf{x})^\top\right)\right].
\end{align}
Let's write the vectorized term as:
\begin{equation}
    \vect\left(\nabla_{\mathbf{h}^1} f_{\bm{\theta}^{t-1}}(\Phi(\mathbf{x})) \Phi(\mathbf{x})^\top\right) = \left[\frac{\partial f}{\partial \mathbf{h}^1_1} \Phi(\mathbf{x}), \frac{\partial f}{\partial \mathbf{h}^1_2} \Phi(\mathbf{x}),  \dots, \frac{\partial f}{\partial \mathbf{h}^1_{d_1}} \Phi(\mathbf{x})\right]
\end{equation}
Our modified kernel then becomes:
\begin{equation}
    \tilde K_t(\mathbf{x}, \mathbf{x}^\prime) = \underbrace{\left(\nabla_{\bm{\psi}} g_{\bm{\psi}_{t-1}}(\mathbf{h}^1)\right)^\top \nabla_{\bm{\psi}} g_{\bm{\psi}_{t-1}}((\mathbf{h}^1)^\prime)}_{\gamma_t(\mathbf{x, \mathbf{x}^\prime})} + \underbrace{\left(\nabla_{\mathbf{h}^1}f_{\bm{\theta}_{t-1}}(\mathbf{x})\right)^\top \nabla_{\mathbf{h}^1}f_{\bm{\theta}_{t-1}}(\mathbf{x}^\prime) \Phi(\mathbf{x})^\top \Phi(\mathbf{x}^\prime)}_{m_t(\mathbf{x}, \mathbf{x}^\prime)}
\end{equation}
From here, we observe that:
\begin{equation}
\begin{split}
    \|\mathbf{h}^1\| &= \left\| \sigma( W_0 \Phi(\mathbf{x}))\right\| \\
    &\stackrel{(i)}{\leq} \|W_0\|_2 \|\Phi(\mathbf{x})\| \\
    & \stackrel{(ii)}{\leq} \sqrt{2} \|W_0\|_2,
\end{split}
\end{equation}
where $(i)$ follows by 1-Lipschitzness of the activation function by Assumption~\ref{ass:main_assumption} and $(ii)$ follows by definition . Hence, by Theorem~\ref{thm:bounds}, $ \|\nabla_{\bm{\psi}} g_{\bm{\psi}_{t-1}}(\mathbf{h}^1)\|$ is unifromly bounded above by a constant $\sqrt{B} < \infty$ that only depends on the network weights. Therefore, by Cauchy–Bunyakovsky–Schwarz, we have that 
\begin{equation}
    |\gamma_t(\mathbf{x, \mathbf{x}^\prime})| \leq \|\nabla_{\bm{\psi}} g_{\bm{\psi}_{t-1}}(\mathbf{h}^1)\| \|\nabla_{\bm{\psi}} g_{\bm{\psi}_{t-1}}((\mathbf{h}^1)^\prime)\| \leq B.
\end{equation}
Invoking Lemma~\ref{lem:bochner}, we get $m_t$:
\begin{align}
    m_t(\mathbf{x}, \mathbf{x}^\prime) &= \left(\nabla_{\mathbf{h}^1}f_{\bm{\theta}_{t-1}}(\mathbf{x})\right)^\top \nabla_{\mathbf{h}^1}f_{\bm{\theta}_{t-1}}(\mathbf{x}^\prime) \Phi(\mathbf{x})^\top \Phi(\mathbf{x}^\prime) \\
    &\approx B_{t-1}  k(\mathbf{x}, \mathbf{x}^\prime),
\end{align}
where $B_{t-1}$ is finite depending on the network weights by Lemma~\ref{lemma:backprop} and $k$ depends on the sampling distribution $p$ by Lemma 2. It also follows that $\sup_{\mathbf{x},\mathbf{x}'}m_t(\mathbf{x},\mathbf{x}')<\infty$. 

\end{proof}

\end{document}